\def\BibTeX{{\rm B\kern-.05em{\sc i\kern-.025em b}\kern-.08em
    T\kern-.1667em\lower.7ex\hbox{E}\kern-.125emX}}
\begin{document}

\title{Improving Generated and Retrieved Knowledge Combination Through Zero-shot Generation\\
}

\newtheorem{theorem}{Theorem}
\newtheorem{lemma}[theorem]{Lemma}
\newtheorem{corollary}[theorem]{Corollary}
\newtheorem{proposition}[theorem]{Proposition}
\newtheorem{fact}[theorem]{Fact}
\newtheorem{definition}{Definition}

\author{
\IEEEauthorblockN{1\textsuperscript{st} Xinkai Du$^{ \dagger}$}
\IEEEauthorblockA{\textit{Dig\&AI Tech} \\
\textit{Sunshine Insurance Group}\\
Beijing, China \\
duxinkai-ghq@sinosig.com}
\and
\IEEEauthorblockN{2\textsuperscript{nd} Quanjie Han$^{ \dagger}$\thanks{$^{\dagger}$Equal contribution.}}
\IEEEauthorblockA{\textit{Dig\&AI Tech} \\
\textit{Sunshine Insurance Group}\\
Beijing, China \\
hanquanjie-ghq@sinosig.com}
\and
\IEEEauthorblockN{3\textsuperscript{rd}  Chao Lv}
\IEEEauthorblockA{\textit{Dig\&AI Tech} \\
\textit{Sunshine Insurance Group }\\
Beijing, China \\
lvchao-ghq@sinosig.com}
\and
\IEEEauthorblockN{4\textsuperscript{th} Yan Liu}
\IEEEauthorblockA{\textit{Dig\&AI Tech} \\
\textit{Sunshine Insurance Group}\\
Beijing, China \\
liuyan02-ghq@sinosig.com}
\and
\IEEEauthorblockN{5\textsuperscript{th} Yalin Sun}
\IEEEauthorblockA{\textit{Dig\&AI Tech} \\
\textit{Sunshine Insurance Group}\\
Beijing, China \\
sunyalin-ghq@sinosig.com}
\and
\IEEEauthorblockN{6\textsuperscript{th} Hao Shu}
\IEEEauthorblockA{\textit{Dept. of Comp. Sci. \& Tech.} \\
\textit{Tsinghua University} \\
Beijing, China \\
13331028785@163.com}
\and
\IEEEauthorblockN{7\textsuperscript{th}  Hongbo Shan}
\IEEEauthorblockA{\textit{Dept. of Comp. Sci. \& Tech.} \\
\textit{Tsinghua University} \\
Beijing, China \\
shanhongbo3203@126.com}
\and
\IEEEauthorblockN{8\textsuperscript{th} Maosong Sun$^{ \star}$\thanks{$^{\star}$Corresponding Author.}}
\IEEEauthorblockA{\textit{Dept. of Comp. Sci. \& Tech.} \\
\textit{Tsinghua University}\\
Beijing, China \\
sms@tsinghua.edu.cn}
}
\maketitle

\begin{abstract}
Open-domain Question Answering (QA) has garnered substantial interest by combining the advantages of faithfully retrieved passages and relevant passages generated through Large Language Models (LLMs). However, there is a lack of definitive labels available to pair these sources of knowledge. In order to address this issue, we propose an unsupervised and simple framework called Bi-Reranking for Merging Generated and Retrieved Knowledge (BRMGR), which utilizes re-ranking methods for both retrieved  passages and LLM-generated passages. We pair the two types of passages using two separate re-ranking methods and then combine them through greedy matching. We demonstrate that BRMGR is equivalent to employing a bipartite matching loss when assigning each retrieved passage with a corresponding LLM-generated passage. The application of our model yielded experimental results from three datasets, improving their performance by +1.7 and +1.6 on NQ and WebQ datasets, respectively, and obtaining comparable result on TriviaQA dataset when compared to competitive baselines.
\end{abstract}

\begin{IEEEkeywords}
unsupervised passage reranking, retrieval augmentation, generation augmentation, open question answering.
\end{IEEEkeywords}

\section{Introduction}
\label{sec:intro}
 
In the realm of knowledge-intensive tasks, such as open-domain question answering, a vast repository of world and domain-specific knowledge is paramount. A commonly employed approach, known as "retrieve-then-read" \cite{lewis2020retrieval,karpukhin2020dense,izacard2021leveraging} involves leveraging external corpora to retrieve relevant passages. Subsequently, a reader model processes the query and retrieved contexts to furnish answers. Nevertheless, this approach is prone to limitations. Specifically, the candidate documents utilized for retrieval are typically chunked into fixed-length segments (e.g., 100 words), leading to the retrieval of noisy or irrelevant information that does not directly pertain to the query at hand \cite{yu2022generate,gao2022precise}.

\begin{figure}[h]
  \centering
  \includegraphics[width=3in,height=2in]{./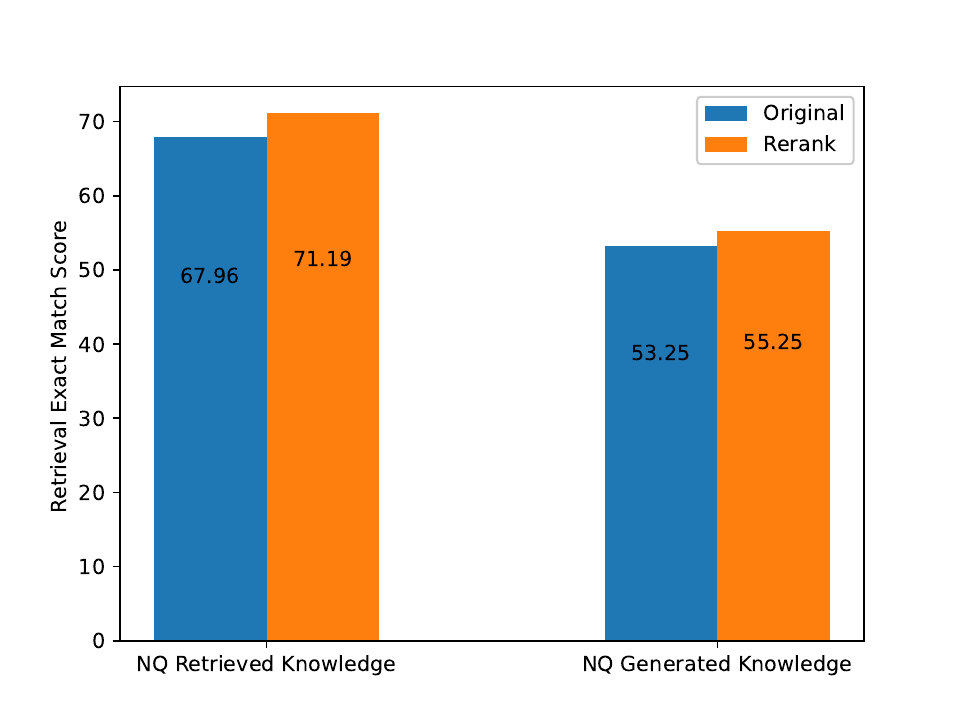}\\
\caption{ Top-3 retrieval exact match score for single knowledge source after reranking knowledge sources.}
\label{fig:introduction_fig}
\end{figure} 

Recently, the strong generative capabilities of Large Language Models (LLMs) \cite{brown2020language,chowdhery2023palm, touvron2023llama} have garnered significant attention. These models trained on vast amounts of unsupervised data, possess the ability to encode a wide range of knowledge within their parameters. LLMs serve as an alternative information source, complementing traditional retrieval methods \cite{brown2020language, singh2021end, zhang2023retrieve}. In fact, an alternative framework known as "generate-then-read" has been proposed, which generates query-related contexts rather than retrieving them from a corpus \cite{yu2022generate, li2023generate}.

Drawing from the foundations established by generation-augmented and retrieval-augmented methods, recent hybrid approaches aim to integrate these techniques to further enhance performance in Question Answering (QA) tasks \cite{mallen2022not,zhang2023merging,abdallah2023generator}. Recognizing that retrieval-augmented methods may introduce unrelated content while generation-augmented methods may yield relevant yet plausible contexts, a compatibility-oriented framework has been introduced to merge retrieved knowledge and LLM-generated knowledge \cite{zhang2023merging}. This framework strives to maintain the factuality of retrieved knowledge while leveraging the relevance of LLM-generated knowledge. To assess compatibility between the two knowledge sources, discriminators such as the evidentiality discriminator and the consistency discriminator have been proposed. However, these methods often rely on silver label mining \cite{asai2022evidentiality,zhang2023relevance,zhang2023merging}, which can be complex and data-intensive.

Motivated by the limitations of existing approaches and inspired by unsupervised reranking methods for retrieval-augmented QA \cite{ponte2017language,sachan2022improving,zhuang2023open,santos2020beyond}, we find that reranking both the retrieved knowledge and generated knowledge \cite{tan2024blinded} can improve the performance (Figure \ref{fig:introduction_fig}). Furthermore, we introduce an unsupervised method for computing compatibility scores between retrieved and LLM-generated knowledge. This approach leverages zero-shot generation to determine compatibility, enabling a simpler and more efficient reranking process. Our method not only combines the strengths of retrieved and generated knowledge but also offers a complementary reranking approach for generated knowledge. Extensive experimental results demonstrate the effectiveness of our method, paving the way for future research in this exciting domain.


\section{Proposed Method}
An overview of our unsupervised Bi-Reranking for Merging Generated and Retrieved knowledge (BRMGR) method is depicted in Figure \ref{BRMGR}. It illustrates the process of combining the faithfulness of retrieved knowledge with the relevant evidence from LLM-generated knowledge using unsupervised learning.

\begin{figure*}[h]
\centerline{\includegraphics[height=3.5in]{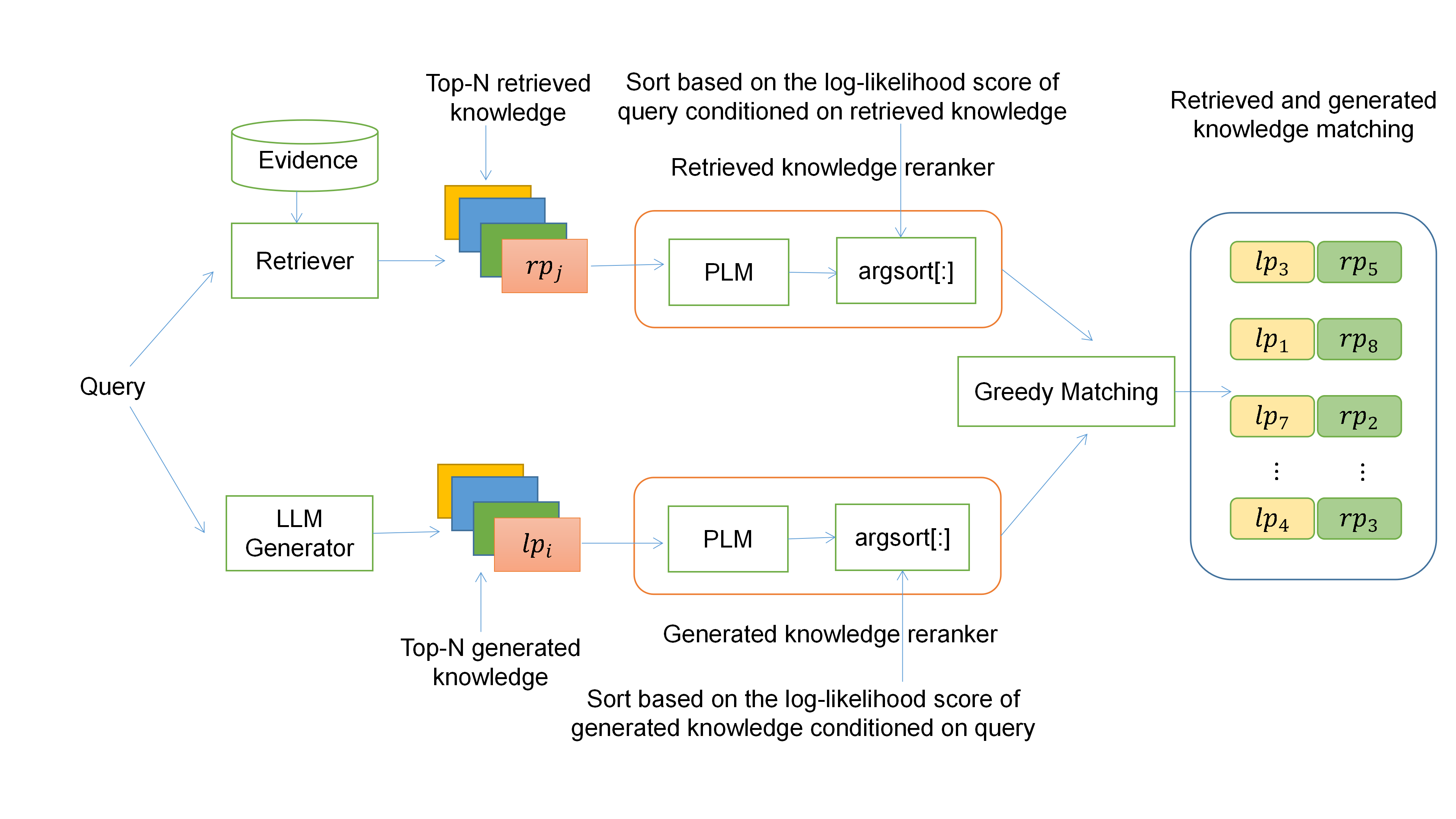}}
\caption{Overview of the BRMGR Framework: It uses an unsupervised method to rerank both LLM-generated and retrieved knowledge for Open-Domain QA. We compute the relevance score of retrieved knowledge based on the log-likelihood score of the query conditioned on retrieved knowledge, and compute the relevance score of generated knowledge via the log-likelihood score of  generated knowledge conditioned on query. Finally, the retrieved knowledge and generated knowledge are combined using a greedy matching approach. } 
\label{BRMGR}
\end{figure*}

Let $\textbf{P}_L=\{\textbf{lp}_1,\textbf{lp}_2,...,\textbf{lp}_M\}$, $\textbf{P}_R=\{\textbf{rp}_1,\textbf{rp}_2,...,\textbf{rp}_N\}$ be a collection of the LLM-generated passages and retrieved passages respectively. Given a query $\textbf{q}$, the goal is to find an optimal combination between $\textbf{lp}_i$ and $\textbf{rp}_j$ such that the correct answer is ranked as highly as possible. The combination relevance score is computed with a language model $p(\textbf{lp}_i,\textbf{rp}_j|\textbf{q}), i=1,2,...,M, j=1,2,...,N$.

Given the query $\textbf{q}$, assume the combination relevance score between $\textbf{lp}_i$ and $\textbf{rp}_j$ is conditional independent, then it can be factorized into two probabilities:
\begin{equation}
p(\textbf{lp}_i,\textbf{rp}_j|\textbf{q})=p(\textbf{lp}_i|\textbf{q})p(\textbf{rp}_j|\textbf{q})
\label{probability_factorization}
\end{equation}
In general, the retrieved passages may contain content unrelated to the query. The Unsupervised Passages Reranking (UPR) suggests the use of the zero-shot query likelihood method to improve passage retrieval \cite{sachan2022improving}.
\begin{equation}
p(\textbf{rp}_j|\textbf{q})=\frac{p(\textbf{q}|\textbf{rp}_j)p(\textbf{rp}_j)}{p(\textbf{q})}\propto p(\textbf{q}|\textbf{rp}_j)p(\textbf{rp}_j)
\label{retrieval_upr}
\end{equation}

Assume $p(\textbf{rp}_j)$ is uniform, then equation (\ref{retrieval_upr}) reduces to 

\begin{equation}
p(\textbf{rp}_j|\textbf{q})\propto p(\textbf{q}|\textbf{rp}_j), \forall \textbf{rp}_j \in \textbf{P}_R
\label{upr_method_explaination}
\end{equation}

Similar to UPR \cite{sachan2022improving}, the combination relevance score in retrieval component is defined as: 
\begin{equation}
\log p(\textbf{q}|\textbf{rp}_j)=\frac{1}{|\textbf{q}|}\sum_t\log p(q_t|\textbf{q}_{<t},\textbf{rp}_j;\Theta)
\label{retrieved_upr_method}
\end{equation}
where $\Theta$ denotes the parameters of a pretrained language model and $|\textbf{q}|$ denotes the number of query tokens.

 Due to the Hallucination of large language model \cite{tonmoy2024comprehensive}, the generated passages may contain some unhelpful content. Given the large language model's strong ability, the generated information is highly related to the query. We propose to utilize the conditional probability of the generated document conditioned on query $p(\textbf{lp}_i|\textbf{q})$ to rerank the generated passages. 

\begin{equation}
p(\textbf{lp}_i|\textbf{q})=\frac{1}{|\textbf{lp}_i|}\sum_t\log p({lp}_{i,t}|\textbf{lp}_{i,<t},\textbf{q};\Theta)
\label{generated_upr_method}
\end{equation}
where $\Theta$ denotes the parameters of a pretrained language model and $|\textbf{lp}_i|$ denotes the number of tokens for the $i$-th generated passage.

Finally the relevance of how the $i$-th generated passage and the $j$-th retrieved passage related to the query is computed by:
\begin{equation}
p(\textbf{lp}_i,\textbf{rp}_j|\textbf{q})\propto p(\textbf{lp}_i|\textbf{q})p(\textbf{q}|\textbf{rp}_j)
\label{proposed_method}
\end{equation}
where $p(\textbf{lp}_i|\textbf{q})$ and $p(\textbf{q}|\textbf{rp}_j)$ are computed by equation (\ref{generated_upr_method}) and (\ref{retrieved_upr_method}), respectively.

Since the combination of computed relevance scores of retrieved and LLM-generated knowledge results in a $M\times N$ matrix, it raises an interesting question: why not use the bipartite matching loss \cite{sui2023joint,du2024label} and Hungarian algorithm \cite{kuhn1955hungarian} to find an optimal match?


\begin{theorem} 
If we assume that the combination relevance score of $p(\textbf{lp}_i,\textbf{rp}_j|\textbf{q})$ can be factorized into the relevance scores of the generated knowledge and the retrieved knowledge, and further, if the number of both types of knowledge is the same, then we can conclude that the optimal match obtained through bipartite matching loss is equivalent to the one obtained through the greedy matching.
\end{theorem}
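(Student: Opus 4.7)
The plan is to exploit the product structure of the score matrix. Under the factorization assumption, the pairwise relevance becomes $s_{ij} = a_i b_j$ where $a_i := p(\textbf{lp}_i|\textbf{q})$ and $b_j := p(\textbf{rp}_j|\textbf{q})$, both non-negative since they are (proportional to) probabilities. Since $M=N$, any admissible assignment is a permutation $\sigma$ on $\{1,\dots,N\}$, and both the bipartite-matching loss and the greedy procedure are maximizing the same objective $F(\sigma) = \sum_i a_i b_{\sigma(i)}$. My target is therefore to show that the greedy maximizer and the global (Hungarian) maximizer of $F$ coincide.

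First, I would invoke the rearrangement inequality: for two non-negative sequences, $\sum_i a_i b_{\sigma(i)}$ is maximized precisely when the two sequences are ordered the same way, i.e.\ when $\sigma$ pairs the $k$-th largest $a_i$ with the $k$-th largest $b_j$ for every $k$. This characterizes the output of the bipartite-matching / Hungarian algorithm on the matrix $[a_i b_j]$.

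Next, I would analyze the greedy procedure. At each step it selects $(i^*, j^*) \in \arg\max_{(i,j)} a_i b_j$ over the remaining rows and columns. Because $a_i b_j$ is separately monotone in $a_i$ and $b_j$ and both are non-negative, the maximizer is attained at the pair consisting of the currently largest remaining $a_i$ and the currently largest remaining $b_j$. Removing this pair and iterating, greedy therefore matches the $k$-th largest $a_i$ with the $k$-th largest $b_j$ for every $k$, which is exactly the rearrangement-inequality optimum. Hence $F(\sigma_{\text{greedy}}) = F(\sigma_{\text{Hungarian}})$.

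The main obstacle is mostly a bookkeeping one: verifying that the greedy ``largest-times-largest'' observation survives in the presence of ties and of possible zero entries. I would handle ties by noting that any tie-breaking rule still preserves the property of pairing equally ranked elements, so the value $F(\sigma)$ is unchanged even if the permutation itself is not unique; and the zero case is vacuous because swapping a zero entry into any position leaves the sum unaffected. With these caveats dispatched, the equivalence follows directly from the rearrangement inequality combined with the separability argument above.
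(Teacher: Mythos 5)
Your proof is correct and reaches the right conclusion, but it is packaged differently from the paper's argument. The paper proceeds by induction on $N$: it notes that the largest entry of the product matrix $a_{ij}=b_ic_j$ is attained at the pair of row- and column-maxima, asserts that this pair is therefore the top-1 combination of the optimal assignment, and recurses on the remaining $(N-1)\times(N-1)$ submatrix. You instead invoke the rearrangement inequality to characterize the global (Hungarian) optimum as the sorted-order pairing in one shot, and then separately verify that the greedy procedure reproduces exactly that pairing, because for non-negative factors the maximum of $a_ib_j$ over the remaining rows and columns is always the product of the largest remaining $a_i$ and the largest remaining $b_j$. The two arguments rest on the same structural fact, but yours is the more complete one: the paper's induction silently assumes that some optimal assignment must contain the max-times-max pair, and that is precisely the exchange step (the two-element case of the rearrangement inequality) which your appeal to the full rearrangement inequality supplies. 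Your treatment of ties and zero entries is a further refinement the paper omits. One caveat worth making explicit: non-negativity of the scores is essential to the ``largest-times-largest'' step, so the argument applies to the probabilities $p(\textbf{lp}_i|\textbf{q})$ and $p(\textbf{q}|\textbf{rp}_j)$ themselves (as in the theorem statement and in your setup), not to the negative log-likelihood surrogates of equations (\ref{retrieved_upr_method}) and (\ref{generated_upr_method}).
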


\begin{proof}
 We will prove this statement using mathematical induction. Since the combination relevance score of $p(\textbf{lp}_i,\textbf{rp}_j|\textbf{q})$
can be factorized, we define $a_{ij}=p(\textbf{lp}_i,\textbf{rp}_j|\textbf{q})=b_ic_j, i,j=1,2,...,N$. 

For $k=1$, this is clearly true. Assuming that it holds true for $k=N-1$, we can show that it also holds true for $k=N$ using the Hungarian algorithm. 

The maximum value of 
$a_{ij}$ is the product 
of the maximum value of $b_i$ and  $c_j$, thus the top-1 combination is the combination of the top-1 generated knowledge and the top-1 retrieved knowledge. By induction, we can solve the remaining $N-1$ pairs using a greedy matching.

\end{proof}

\section{Experiments}
\subsection{Experimental Setup}
\subsubsection{Datasets} Following previous work \cite{zhang2023merging}, we use the three popular open-domain Question Answering (QA) datasets of TriviaQA \cite{joshi2017triviaqa},
Natural Questions (NQ) \cite{kwiatkowski2019natural} and WebQuestions (WebQ) \cite{berant2013semantic}. 

For the generated knowledge, we use those provided by \cite{yu2022generate} which were produced by prompting InstructGPT \cite{ouyang2022training}. There are 20 generated passages for each question with human prompts and 10 generated passages for clustering-based prompt method.

\subsubsection{Evaluation Metric} To evaluate the effectiveness of the proposed method, we measure its performance using the conventional \emph{top-K retrieval exact match} metric \cite{rajpurkar2016squad}. This metric calculates the proportion of questions for which at least one passage within the top-K passages contains a span that matches the human-annotated answer for that question \cite{sachan2022improving}.
\subsubsection{Baselines}
Comparable to \cite{zhang2023merging}, we employ single and two knowledge sources in comparison experiments to demonstrate our method's effectiveness.

For single knowledge sources, we consider four methods:
\begin{itemize}
\item \textbf{Retri-Origin}: Utilizes retrieved knowledge directly.
\item \textbf{Retri-Rerank}: Reranks retrieved knowledge using UPR.
\item \textbf{Gen-Origin}: Utilizes generated knowledge directly.
\item \textbf{Gen-Rerank}: Reranks generated knowledge using UPR.
\end{itemize}
When dealing with two knowledge sources (retrieved and generated), we propose two additional methods:
\begin{itemize}
\item \textbf{Origin-Combi}: Simply merges the original retrieved and generated knowledge without reranking.
\item \textbf{COMBO}: COMBO matches LLM-generated passages with their retrieved counterparts to form compatible pairs. It's solely utilized as a comparative technique for our proposed approach in open question answering scenarios.
\item \textbf{BRMGR}: Our proposed unsupervised Bi-Reranking for Merging Generated and Retrieved knowledge method, which aims to optimize the combination of both sources.

\end{itemize}

\subsubsection{Implementation Details}
Following the approach in \cite{sachan2022improving}, we employ a range of Flan-T5 models \cite{raffel2020exploring}, varying from base to xlarge versions. Additionally, we consider the T0-3B model \cite{victor2022multitask} due to its promising performance in retrieved passages \cite{sachan2022improving}. To rerank the retrieved passages, the verbalizer is set as \emph{Please write a question based on this passage}. To provide context, the title and text of each passage are concatenated with a verbalizer head as follows: \textrm{"passage: "}. All experiments are conducted on a single 80G A100 GPU.

\subsection{Main Results}
All methods in this study utilize 10 retrieved and/or LLM-generated passages for each question. The set of 10 retrieved passages is selected based on the top-10 passages returned by the Dense Passage Retrieval (DPR) method \cite{karpukhin2020dense}. On the other hand, the 10 generated passages are sampled from the passages generated by humans prompts in \cite{yu2022generate}.

\textbf{Retrieval} The results of the \emph{top-K retrieval exact match} metric, using the T0-3B model, are presented in Table \ref{recall_em_result} for the three datasets. It is evident that our method achieves the best performance, and the utilization of two knowledge sources outperforms using a single knowledge source. Notably, for the TriviaQA and WebQ test sets, the generated passages are found to be more helpful compared to the retrieved passages. Additionally, following the implementation of two separate unsupervised re-ranking methods, the performance of both the retrieved and generated passages exhibits improvement.

\begin{table}
\begin{center}
{\caption{Top-\{3, 5\} exact match result on test set before and after reranking of the 10 retrieved and/or generated passages. }\label{recall_em_result}}
\begin{tabular}{lcccccc}
\hline
\multirow{2}{*}{Methods }&\multicolumn{2}{c}{\textbf{TriviaQA }}&\multicolumn{2}{c}{\textbf{NQ}}&\multicolumn{2}{c}{\textbf{WebQ}}\\
 &Top-3&Top-5 &Top-3&Top-5 &Top-3&Top-5  \\
 \hline
\textit{\scriptsize Single Knowledge}&\multicolumn{6}{c}{}\\
Retri-Origin&68.57&72.40&63.35&68.78&60.09&65.01\\
Retri-Rerank&73.27&74.48&65.48&70.97&61.61&66.29\\
Gen-Origin&75.71&78.66&54.63&59.92&62.11&66.58\\
Gen-Rerank&75.78&78.82&56.62&61.58&64.76&68.31\\
\textit{\scriptsize Two Knowledge}&\multicolumn{6}{c}{}\\
Origin-Combi&82.34&84.41&75.68&79.78&74.61&78.20\\
BRMGR&\textbf{83.06}&\textbf{84.81}&\textbf{77.01}&\textbf{81.44}&\textbf{74.90}&\textbf{78.49}\\
 \hline
\end{tabular}
\end{center}
\end{table}

\textbf{Question Answering} In order to assess the performance of our method in open question answering, we employ Fusion-in-Decoder \cite{izacard2021leveraging} as the reader model, with COMBO \cite{zhang2023merging} serving as the baseline method. The results of this evaluation are presented in Table \ref{oqa_em_result}.

The Table \ref{oqa_em_result} clearly demonstrates that our method outperforms the baseline approach, achieving the strongest overall performance. In addition, re-ranking the original order of both the retrieved and generated passages leads to improved results. Moreover, incorporating two knowledge sources for open question answering proves to be more effective than relying on a single knowledge source alone.

\begin{table}
\begin{center}
{\caption{Exact match scores computed by FiD on test dataset.}\label{oqa_em_result}}
\begin{tabular}{p{2cm}p{1.5cm}p{1.5cm}p{1.5cm}}
\hline
\rule{0pt}{12pt}
Methods&\textbf{TriviaQA}&\textbf{NQ}&\textbf{WebQ}\\
 \hline
\textit{\scriptsize Single Knowledge}&\multicolumn{3}{c}{}\\
Retri-Origin&66.3&50.8&50.1\\
Gen-Origin&69.6&52.6&42.6\\
\textit{\scriptsize Two Knowledge}&\multicolumn{3}{c}{}\\
COMBO &\textbf{74.6}&54.2&53.0\\
Origin-Combi&73.2&53.8&52.4\\
Retri-Rerank&73.5&54.4&53.1\\
Gen-Rerank&73.8&54.6&53.3\\
BRMGR&74.4&\textbf{55.9}&\textbf{54.6}\\
 \hline
\end{tabular}
\end{center}
\end{table}

\subsection{Ablation Studies}
\subsubsection{Effect of the Pretrained-Language Models}
To examine the impact of pretrained language models (PLMs) on the exact match score of generated passages, a range of Flan-T5 models \cite{raffel2020exploring} is utilized. Additionally, T0-3B model \cite{victor2022multitask} is utilized to evaluate the performance on the NQ development set. Results presented in Table \ref{GRF_upr_result} demonstrate that all the pretrained language models contribute to overall performance improvement in generated passages. This contrasts with the size-dependent phenomenon observed in retrieved passages. Since the generated passages are generated by these powerful and large language models, reranking with different PLM sizes yields similar results.
\begin{table}
\begin{center}
{\caption{Comparison of different pre-trained language models (PLMs) as re-rankers for the generated passages on the NQ development set.}\label{GRF_upr_result}}
\begin{tabular}{lcccc}
\hline
\rule{0pt}{12pt}
\multirow{2}{*}{Generated Reranker} &\multicolumn{4}{c}{\textbf{NQ (dev)}}\\
& Top-1&Top-3&Top-5 &Top-8 \\
\hline
\\[-8pt]
\quad None&39.34&53.25&58.22&62.49\\
\quad T5-base&\textbf{40.11}&55.24&59.63&\textbf{62.96}\\
\quad T5-large&40.09&55.28&59.62&\textbf{62.96}\\
\quad T5-xlarge&40.09&\textbf{55.29}&59.61&62.92\\
\quad T0-3B&40.07&55.25&\textbf{59.68}&62.94\\
\hline
\end{tabular}
\end{center}
\end{table}
\subsubsection{Importance of Document Generation}
To grasp the significance of re-ranking using document generation $p(\textbf{lp}|\textbf{q})$, we contrast it with an alternative unsupervised method where re-ranking is question generation conditioned on the generated knowledge $p(\textbf{q}|\textbf{lp})$. This value can be approximated by calculating the average log-likelihood of generating the question tokens using PLM and teacher-forcing.
\begin{equation}
\log p(\textbf{q}|\textbf{lp})=\frac{1}{|\textbf{q}|}\sum_t\log p(q_t|\textbf{q}_{<t},\textbf{lp};\Theta)
\label{ablation_generated_upr_method}
\end{equation}

From the observed results in figure \ref{test_generated_UPR}, it is evident that the reranking scores, computed by the log-likelihood of generating passage tokens conditioned on the query, consistently enhance the original outcome. However, the reranking scores computed by the log-likelihood of generating the query conditioned on the generated knowledge demonstrate a deterioration in performance. Notably, this performance degradation becomes more significant with a smaller size of the PLM.

\begin{figure}[h]
\centerline{\includegraphics[height=2.5in]{./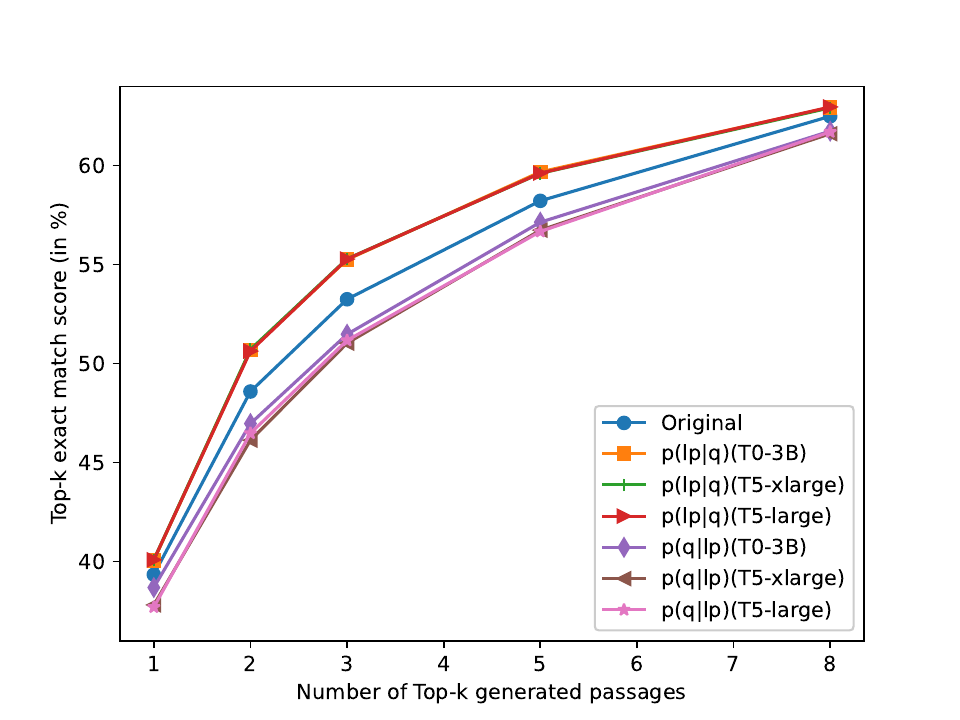}}
\captionsetup{justification=justified}
\caption{Comparison of two passage re-ranking approaches on the NQ development set: (1) when generating question tokens conditioned on the passage $p(\textbf{q}|\textbf{lp})$, and (2) when generating passage tokens conditioned on the question $p(\textbf{lp}|\textbf{q})$. Results highlight
the usefulness of document generation in generated knowledge for reranking.} \label{test_generated_UPR}
\end{figure}
\section{Conclusion}
In this work, we propose an unsupervised bi-reranking method BRMGR for merging retrieved passages and LLM-generated passages in Open-domain Question Answering. Rather than relying on mined silver labels for computing compatibility scores between the two types of passages. Extensive experiments on three datasets demonstrate the success of this proposed method.

\textbf{Acknowledgements}
This work is supported by Beijing Municipal Science and Technology Plan Project (Z241100001324025).

\vfill\pagebreak



\bibliographystyle{IEEEtran}
\bibliography{refs}

\begin{thebibliography}{10}
\providecommand{\url}[1]{#1}
\csname url@samestyle\endcsname
\providecommand{\newblock}{\relax}
\providecommand{\bibinfo}[2]{#2}
\providecommand{\BIBentrySTDinterwordspacing}{\spaceskip=0pt\relax}
\providecommand{\BIBentryALTinterwordstretchfactor}{4}
\providecommand{\BIBentryALTinterwordspacing}{\spaceskip=\fontdimen2\font plus
\BIBentryALTinterwordstretchfactor\fontdimen3\font minus \fontdimen4\font\relax}
\providecommand{\BIBforeignlanguage}[2]{{%
\expandafter\ifx\csname l@#1\endcsname\relax
\typeout{** WARNING: IEEEtran.bst: No hyphenation pattern has been}%
\typeout{** loaded for the language `#1'. Using the pattern for}%
\typeout{** the default language instead.}%
\else
\language=\csname l@#1\endcsname
\fi
#2}}
\providecommand{\BIBdecl}{\relax}
\BIBdecl

\bibitem{lewis2020retrieval}
P.~Lewis, E.~Perez, A.~Piktus, F.~Petroni, V.~Karpukhin, N.~Goyal, H.~K{\"u}ttler, M.~Lewis, W.-t. Yih, T.~Rockt{\"a}schel \emph{et~al.}, ``Retrieval-augmented generation for knowledge-intensive nlp tasks,'' \emph{Advances in Neural Information Processing Systems}, vol.~33, pp. 9459--9474, 2020.

\bibitem{karpukhin2020dense}
V.~Karpukhin, B.~Oguz, S.~Min, P.~Lewis, L.~Wu, S.~Edunov, D.~Chen, and W.-t. Yih, ``Dense passage retrieval for open-domain question answering,'' in \emph{Proceedings of the 2020 Conference on Empirical Methods in Natural Language Processing (EMNLP)}.\hskip 1em plus 0.5em minus 0.4em\relax Association for Computational Linguistics, 2020.

\bibitem{izacard2021leveraging}
G.~Izacard and E.~Grave, ``Leveraging passage retrieval with generative models for open domain question answering,'' in \emph{EACL 2021-16th Conference of the European Chapter of the Association for Computational Linguistics}.\hskip 1em plus 0.5em minus 0.4em\relax Association for Computational Linguistics, 2021, pp. 874--880.

\bibitem{yu2022generate}
W.~Yu, D.~Iter, S.~Wang, Y.~Xu, M.~Ju, S.~Sanyal, C.~Zhu, M.~Zeng, and M.~Jiang, ``Generate rather than retrieve: Large language models are strong context generators,'' in \emph{The Eleventh International Conference on Learning Representations}, 2022.

\bibitem{gao2022precise}
L.~Gao, X.~Ma, J.~Lin, and J.~Callan, ``Precise zero-shot dense retrieval without relevance labels,'' \emph{arXiv preprint arXiv:2212.10496}, 2022.

\bibitem{brown2020language}
T.~Brown, B.~Mann, N.~Ryder, M.~Subbiah, J.~D. Kaplan, P.~Dhariwal, A.~Neelakantan, P.~Shyam, G.~Sastry, A.~Askell \emph{et~al.}, ``Language models are few-shot learners,'' \emph{Advances in neural information processing systems}, vol.~33, pp. 1877--1901, 2020.

\bibitem{chowdhery2023palm}
A.~Chowdhery, S.~Narang, J.~Devlin, M.~Bosma, G.~Mishra, A.~Roberts, P.~Barham, H.~W. Chung, C.~Sutton, S.~Gehrmann \emph{et~al.}, ``Palm: Scaling language modeling with pathways,'' \emph{Journal of Machine Learning Research}, vol.~24, no. 240, pp. 1--113, 2023.

\bibitem{touvron2023llama}
H.~Touvron, T.~Lavril, G.~Izacard, X.~Martinet, M.-A. Lachaux, T.~Lacroix, B.~Rozi{\`e}re, N.~Goyal, E.~Hambro, F.~Azhar \emph{et~al.}, ``Llama: Open and efficient foundation language models,'' \emph{arXiv preprint arXiv:2302.13971}, 2023.

\bibitem{singh2021end}
D.~Singh, S.~Reddy, W.~Hamilton, C.~Dyer, and D.~Yogatama, ``End-to-end training of multi-document reader and retriever for open-domain question answering,'' \emph{Advances in Neural Information Processing Systems}, vol.~34, pp. 25\,968--25\,981, 2021.

\bibitem{zhang2023retrieve}
P.~Zhang, S.~Xiao, Z.~Liu, Z.~Dou, and J.-Y. Nie, ``Retrieve anything to augment large language models,'' \emph{arXiv preprint arXiv:2310.07554}, 2023.

\bibitem{li2023generate}
M.~Li, H.~Zhuang, K.~Hui, Z.~Qin, J.~Lin, R.~Jagerman, X.~Wang, and M.~Bendersky, ``Generate, filter, and fuse: Query expansion via multi-step keyword generation for zero-shot neural rankers,'' \emph{arXiv preprint arXiv:2311.09175}, 2023.

\bibitem{mallen2022not}
A.~Mallen, A.~Asai, V.~Zhong, R.~Das, H.~Hajishirzi, and D.~Khashabi, ``When not to trust language models: Investigating effectiveness and limitations of parametric and non-parametric memories,'' \emph{arXiv preprint arXiv:2212.10511}, 2022.

\bibitem{zhang2023merging}
Y.~Zhang, M.~Khalifa, L.~Logeswaran, M.~Lee, H.~Lee, and L.~Wang, ``Merging generated and retrieved knowledge for open-domain qa,'' in \emph{Proceedings of the 2023 Conference on Empirical Methods in Natural Language Processing}, 2023, pp. 4710--4728.

\bibitem{abdallah2023generator}
A.~Abdallah and A.~Jatowt, ``Generator-retriever-generator: A novel approach to open-domain question answering,'' \emph{arXiv preprint arXiv:2307.11278}, 2023.

\bibitem{asai2022evidentiality}
A.~Asai, M.~Gardner, and H.~Hajishirzi, ``Evidentiality-guided generation for knowledge-intensive nlp tasks,'' in \emph{Proceedings of the 2022 Conference of the North American Chapter of the Association for Computational Linguistics: Human Language Technologies}, 2022, pp. 2226--2243.

\bibitem{zhang2023relevance}
H.~Zhang, R.~Zhang, J.~Guo, M.~de~Rijke, Y.~Fan, and X.~Cheng, ``From relevance to utility: Evidence retrieval with feedback for fact verification,'' \emph{arXiv preprint arXiv:2310.11675}, 2023.

\bibitem{ponte2017language}
J.~M. Ponte and W.~B. Croft, ``A language modeling approach to information retrieval,'' in \emph{ACM SIGIR Forum}, vol.~51, no.~2.\hskip 1em plus 0.5em minus 0.4em\relax ACM New York, NY, USA, 2017, pp. 202--208.

\bibitem{sachan2022improving}
D.~Sachan, M.~Lewis, M.~Joshi, A.~Aghajanyan, W.-t. Yih, J.~Pineau, and L.~Zettlemoyer, ``Improving passage retrieval with zero-shot question generation,'' in \emph{Proceedings of the 2022 Conference on Empirical Methods in Natural Language Processing}, 2022, pp. 3781--3797.

\bibitem{zhuang2023open}
S.~Zhuang, B.~Liu, B.~Koopman, and G.~Zuccon, ``Open-source large language models are strong zero-shot query likelihood models for document ranking,'' in \emph{Findings of the Association for Computational Linguistics: EMNLP 2023}, 2023, pp. 8807--8817.

\bibitem{santos2020beyond}
C.~N.~d. Santos, X.~Ma, R.~Nallapati, Z.~Huang, and B.~Xiang, ``Beyond [cls] through ranking by generation,'' \emph{arXiv preprint arXiv:2010.03073}, 2020.

\bibitem{tan2024blinded}
H.~Tan, F.~Sun, W.~Yang, Y.~Wang, Q.~Cao, and X.~Cheng, ``Blinded by generated contexts: How language models merge generated and retrieved contexts when knowledge conflicts?'' in \emph{Proceedings of the 62nd Annual Meeting of the Association for Computational Linguistics (Volume 1: Long Papers)}, 2024, pp. 6207--6227.

\bibitem{tonmoy2024comprehensive}
S.~Tonmoy, S.~Zaman, V.~Jain, A.~Rani, V.~Rawte, A.~Chadha, and A.~Das, ``A comprehensive survey of hallucination mitigation techniques in large language models,'' \emph{arXiv preprint arXiv:2401.01313}, 2024.

\bibitem{sui2023joint}
D.~Sui, X.~Zeng, Y.~Chen, K.~Liu, and J.~Zhao, ``Joint entity and relation extraction with set prediction networks,'' \emph{IEEE Transactions on Neural Networks and Learning Systems}, 2023.

\bibitem{du2024label}
X.~Du, Q.~Han, Y.~Sun, C.~Lv, and M.~Sun, ``Label dependencies-aware set prediction networks for multi-label text classification,'' in \emph{ICASSP 2024-2024 IEEE International Conference on Acoustics, Speech and Signal Processing (ICASSP)}.\hskip 1em plus 0.5em minus 0.4em\relax IEEE, 2024, pp. 11\,206--11\,210.

\bibitem{kuhn1955hungarian}
H.~W. Kuhn, ``The hungarian method for the assignment problem,'' \emph{Naval research logistics quarterly}, vol.~2, no. 1-2, pp. 83--97, 1955.

\bibitem{joshi2017triviaqa}
M.~Joshi, E.~Choi, D.~S. Weld, and L.~Zettlemoyer, ``Triviaqa: A large scale distantly supervised challenge dataset for reading comprehension,'' in \emph{Proceedings of the 55th Annual Meeting of the Association for Computational Linguistics (Volume 1: Long Papers)}, 2017, pp. 1601--1611.

\bibitem{kwiatkowski2019natural}
T.~Kwiatkowski, J.~Palomaki, O.~Redfield, M.~Collins, A.~Parikh, C.~Alberti, D.~Epstein, I.~Polosukhin, J.~Devlin, K.~Lee \emph{et~al.}, ``Natural questions: a benchmark for question answering research,'' \emph{Transactions of the Association for Computational Linguistics}, vol.~7, pp. 453--466, 2019.

\bibitem{berant2013semantic}
J.~Berant, A.~Chou, R.~Frostig, and P.~Liang, ``Semantic parsing on freebase from question-answer pairs,'' in \emph{Proceedings of the 2013 conference on empirical methods in natural language processing}, 2013, pp. 1533--1544.

\bibitem{ouyang2022training}
L.~Ouyang, J.~Wu, X.~Jiang, D.~Almeida, C.~Wainwright, P.~Mishkin, C.~Zhang, S.~Agarwal, K.~Slama, A.~Ray \emph{et~al.}, ``Training language models to follow instructions with human feedback,'' \emph{Advances in Neural Information Processing Systems}, vol.~35, pp. 27\,730--27\,744, 2022.

\bibitem{rajpurkar2016squad}
P.~Rajpurkar, J.~Zhang, K.~Lopyrev, and P.~Liang, ``Squad: 100,000+ questions for machine comprehension of text,'' in \emph{Proceedings of the 2016 Conference on Empirical Methods in Natural Language Processing}, 2016, pp. 2383--2392.

\bibitem{raffel2020exploring}
C.~Raffel, N.~Shazeer, A.~Roberts, K.~Lee, S.~Narang, M.~Matena, Y.~Zhou, W.~Li, and P.~J. Liu, ``Exploring the limits of transfer learning with a unified text-to-text transformer,'' \emph{The Journal of Machine Learning Research}, vol.~21, no.~1, pp. 5485--5551, 2020.

\bibitem{victor2022multitask}
S.~Victor, W.~Albert, R.~Colin, B.~Stephen, S.~Lintang, A.~Zaid, C.~Antoine, S.~Arnaud, R.~Arun, D.~Manan \emph{et~al.}, ``Multitask prompted training enables zero-shot task generalization,'' in \emph{International Conference on Learning Representations}, 2022.

\end{thebibliography}

\end{document}